\documentclass{article}

\usepackage[utf8]{inputenc} 
\usepackage[T1]{fontenc}    

\usepackage{natbib}
\usepackage{url}            
\usepackage{booktabs}       
\usepackage{amsfonts}       
\usepackage{nicefrac}       
\usepackage{microtype}      
\usepackage{xcolor}         
\usepackage{enumitem}
\usepackage[most]{tcolorbox}
\usepackage{graphicx}
\usepackage{subcaption}
\usepackage{minted}
\usepackage{amsthm}
\usepackage{mathtools}
\usepackage{thmtools}
\usepackage{booktabs}
\usepackage{xspace}
\usepackage{comment}
\usepackage[pdftex,colorlinks,linkcolor=blue,citecolor=blue,filecolor=blue,urlcolor=blue]{hyperref}
\usepackage[capitalise,noabbrev]{cleveref}
\usepackage[letterpaper,hmargin=1.0in,vmargin=1.0in]{geometry}
\usepackage{svg}
\graphicspath{{pics/}}

\newboolean{isInternal}
\setboolean{isInternal}{false}

\ifthenelse{\boolean{isInternal}}{\includecomment{internal}}{\excludecomment{internal}}

\ifthenelse{\boolean{isInternal}}{

  \newcommand{\internalComment}[1]{\textbf{\color{red}[}{\footnotesize#1}\textbf{\color{red}]}}
}{
  \newcommand{\internalComment}[1]{}
}

\DeclareMathOperator*{\argmax}{arg\,max}

\declaretheoremstyle[
headfont=\bfseries,
bodyfont=\slshape,
sibling=subsection,
spaceabove=\medskipamount,
spacebelow=\medskipamount,
]{plain}

\declaretheorem[style=plain,name=Example]{example}

\declaretheorem[style=plain,name=Proposition]{proposition}

\declaretheoremstyle[
headfont=\bfseries,
sibling=subsection,
spaceabove=\medskipamount,
spacebelow=\medskipamount,
]{definition}

\declaretheorem[style=definition,name=Definition]{definition}

\declaretheoremstyle[
headfont=\itshape,
sibling=subsection,
spaceabove=\medskipamount,
spacebelow=\medskipamount,
]{remark}

\definecolor{mplblue}{HTML}{1f77b4}
\definecolor{mplorange}{HTML}{ff7f0e}
\definecolor{mplgreen}{HTML}{2ca02c}
\definecolor{mplred}{HTML}{d62728}
\definecolor{mplpurple}{HTML}{9467bd}
\definecolor{mplbrown}{HTML}{8c564b}
\definecolor{mplpink}{HTML}{e377c2}
\definecolor{mplgray}{HTML}{7f7f7f}
\definecolor{mplolive}{HTML}{bcbd22}
\definecolor{mplcyan}{HTML}{17becf}

\newtheorem*{theorem*}{Theorem}

\newcommand{\agentk}{\text{Agent-$\nicefrac{1}{3}\times 3$\xspace}}

\newcommand{\kshot}{$k$-shot\xspace}

\definecolor{roundnotecolor}{HTML}{19519a} 
\newtcolorbox{roundnote}[1][]{
  enhanced,
  breakable,
  colback=roundnotecolor!3,
  colframe=roundnotecolor,
  boxrule=0.5pt,
  arc=3mm,
  left=6pt,right=6pt,top=6pt,bottom=6pt,
  before upper=\setlength{\parskip}{\medskipamount},
  #1
}

\newcommand\MTkillspecial[1]{
  \bgroup
  \catcode`\&=9
  \let\\\relax%
  \scantokens{#1}%
  \egroup
}
\newcommand{\DeclareCustomDelim}[3]{
  \DeclarePairedDelimiter{#1}{#2}{#3}
  \reDeclarePairedDelimiterInnerWrapper{#1}{star}{
    \mathopen{##1\vphantom{\MTkillspecial{##2}}\kern-\nulldelimiterspace\right.}
  ##2
  \mathclose{\left.\kern-\nulldelimiterspace\vphantom{\MTkillspecial{##2}}##3}
  }
}

\DeclareCustomDelim{\prn}{\lparen}{\rparen}
\DeclareCustomDelim{\crl}{\{}{\}}
\DeclareCustomDelim{\brk}{[}{]}
\DeclareCustomDelim{\norm}{\|}{\|}
\DeclareCustomDelim{\abs}{|}{|}

\DeclarePairedDelimiterXPP\Prob[1]{\Problet}\{\}{}{
  
  #1}
\DeclarePairedDelimiterXPP\Expect[1]{\Expectlet}[]{}{
  
  #1}

\title{
When Independent Sampling Outperforms Agentic Reasoning
}

\author{
  Yihe Dong\thanks{Equal contribution} \\
  Princeton University\\
  \texttt{ydong@princeton.edu} \\
  \and
  Boris Shigida\footnotemark[1] \\
  Princeton University\\
  \texttt{bs1624@princeton.edu} \\
}

\date{}

\NewDocumentCommand{\py}{+v}{}

\newsavebox{\pycodebox}      

\NewDocumentEnvironment{pycode}{}{%
  \VerbatimEnvironment%
  \begin{lrbox}{\pycodebox}%
  \begin{minipage}{\linewidth}%
  \begin{Verbatim}%
}{
  \end{Verbatim}%
  \end{minipage}%
  \end{lrbox}%
}

\NewDocumentEnvironment{pycontext}{} {
  \VerbatimEnvironment
  \begin{lrbox}{\pycodebox}%
  \begin{minipage}{\linewidth}%
  \begin{Verbatim}
}{
  \end{Verbatim}
  \end{minipage}
  \end{lrbox}
}

\NewDocumentEnvironment{nothing}{}{}{}

\begin{pycontext}
import os
import sys
os.chdir("../code")
sys.path.insert(0, os.getcwd())
from report import *
\end{pycontext}

\begin{document}

\maketitle

\begin{abstract}

We study how to allocate inference-time compute for competitive programming under fixed budgets.
Evaluating 216 Codeforces problems across Divisions~1--3, we compare agent-based reasoning with repeated independent sampling (\kshot) as a function of both cost and number of model calls.
Across models and difficulty levels, \kshot consistently achieves a better accuracy-cost and accuracy-query tradeoff.
This gap persists despite prompt caching in agent frameworks, indicating lower per-call effectiveness.
Our results show that, for self-contained algorithmic tasks, independent exploration can outperform deeper agentic reasoning under realistic resource constraints.
We also provide a budget-allocation analysis when the inference budget is fixed, and prove that a cost-optimal solver minimizes the principled metric log failure likelihood per dollar.%
\footnote{Our code is publicly available at \url{https://github.com/princeton-pli/competitive-programming-agents}.}
\end{abstract}




\section{Introduction}

Large language models (LLMs) have recently demonstrated strong capabilities on algorithmic reasoning and competitive programming tasks, building on the Transformer architecture \citep{vaswani2017attention} and benefiting from increased test-time computation. Prior work has shown that increasing inference-time compute through techniques like repeated sampling, self-consistency, or majority voting can substantially improve reasoning accuracy without additional training \citep{debateOrVote}. In parallel, \emph{agent-based} approaches have emerged as a powerful paradigm, enabling models to interleave reasoning with actions such as code execution, debugging, and environment interaction \citep{yao2023react,li2023theoryofmind}. These methods have achieved strong results in complex decision-making and software engineering settings, including state-of-the-art performance on SWE-bench \citep{jimenez2024swebench,NEURIPS2024_5a7c9475}.

In the context of competitive programming, recent work has explored both of these aprooaches.
\Citet{zheng2025livecodebenchproolympiadmedalists} evaluate large reasoning models on challenging competitive programming problems and observe a substantial gap between benchmark performance and results reported in official model releases.
They argue that this gap can be partially closed by granting models terminal access, enabling behaviors such as writing brute-force solvers to validate optimized implementations.
OpenAI et al.~\citep{openai2025competitiveprogramminglargereasoning}
implement this along with additional methods of improving performance, such as targeted fine-tuning and guidance, giving the model an opportunity to develop complex test-time strategies, etc.
While these results suggest that agentic interaction and feedback can, in principle, improve competitive programming performance, such approaches incur significant inference-time cost, and potentially require complex custom environments. Existing evaluations typically assume effectively unbounded resources or report performance at a single operating point without focusing on the cost.

In practice, real-world deployments of LLMs operate under strict inference budgets, whether measured in monetary cost, number of model calls, or latency. Under such constraints, the decision of primary importance is how best to allocate limited resources: should one invest them into a single agent trajectory with rich interaction and feedback, or distribute them across multiple independent attempts that favor exploration? More broadly, existing work lacks a systematic understanding of the \emph{accuracy--cost tradeoff} induced by different inference strategies.

\begin{figure}[htb!]
\centering
\includegraphics[width=0.450\textwidth]{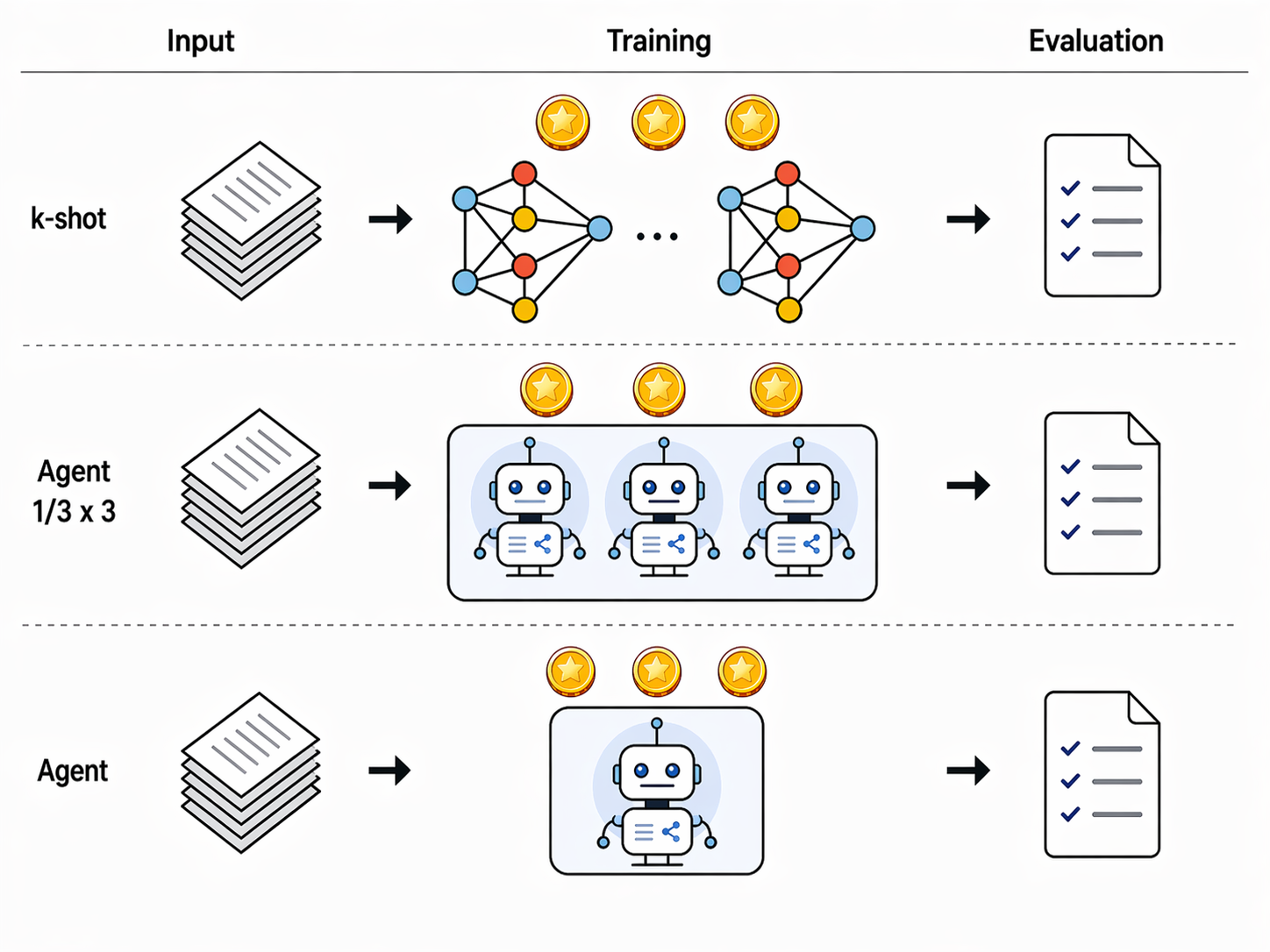}
\hfill
\includegraphics[width=0.265\textwidth]{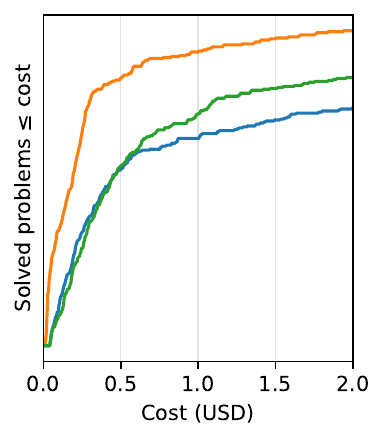}
\hfill
\includegraphics[width=0.265\textwidth]{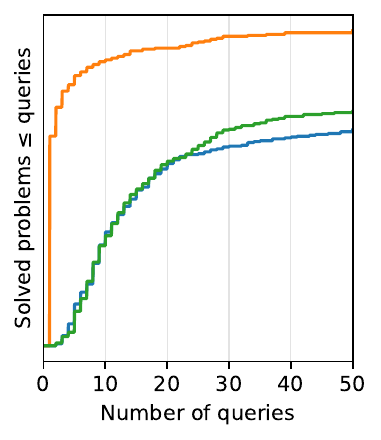}
\caption{
\textbf{Left}: overview of our three evaluation settings.
From top to bottom: \textit{\kshot}: each problem is attempted $k$ times independently by one API call; \textit{\agentk} budget-partitioned agents where three independent SWE-agent instances are given $c / 3$ dollars to solve the problem; \textit{Agent}: one full SWE-agent run given $c$ dollars (where $c$ is the budget).
\textbf{Center and right}:
averaged cumulative solved problems (across all models and divisions) versus inference cost and number of queries: {\color{mplorange}\kshot inference} outperforms {\color{mplgreen}budget-partitioned agents} and {\color{mplblue}one full SWE-agent} by both metrics.}
\label{fig:overview}
\end{figure}

In this work, we address this gap by studying a simple but fundamental question: \emph{given a fixed inference budget, which strategy yields the best performance on competitive programming problems?} Rather than focusing on a single accuracy number, we characterize the \emph{entire trajectory of performance as a function of cost}, providing a fine-grained comparison across inference regimes. Competitive programming is a particularly clean testbed for testing this accuracy-cost tradeoff, as each problem is self-contained, automatically judged, and has well-defined environment navigation.
We evaluate three representative strategies:
\begin{enumerate}[label=(\roman*)]
\item \textbf{$k$-shot}, which repeatedly samples independent solutions;
\item \textbf{Agent}, which allows iterative reasoning and terminal interaction within a fixed budget;
\item \textbf{\agentk, or budget-partitioned agents}, which split the same total budget across multiple independent agent runs.
\end{enumerate}
All methods are evaluated under matched cost or query constraints on a diverse set of Codeforces\footnote{\url{https://codeforces.com/}} problems spanning Divisions~1--3. The problems were chosen to cover a range of difficulty levels, and their publication dates post-date the release dates for the models tested to avoid data contamination.

Our results reveal a surprising and consistent pattern.
\begin{roundnote}
Across model families and difficulty levels, $k$-shot inference significantly outperforms agent-based methods when performance is normalized by either cost or number of model calls (\cref{fig:overview}).
\end{roundnote}
This gap persists even though agent frameworks cache context and amortize prompt costs to reduce the disadvantage of agents spanning from longer contexts or higher per-step overhead. While agent-based methods have demonstrated clear advantages in settings such as software engineering benchmarks (where navigation, file manipulation, and iterative testing are unavoidable), their overhead appears poorly matched to the structure of competitive programming tasks under tight resource constraints.

To better understand the failures of agents, we analyze agent trajectories
using LLM-as-a-judge
\citep{gu2024survey}
and identify recurring failure modes, including unproductive iterative loops, ineffective debugging strategies, and inability to find the key algorithm. Analogous inefficiencies have been observed in prior studies of multi-agent debate and collaboration, where much of the observed performance gain can be attributed to aggregation of independent trials rather than multi-agent interaction
\citep{debateOrVote,madSparseTopology,MAD}. Indeed, independent $k$-shot attempts naturally emphasize exploration, allowing rare but correct solution paths to be discovered early at relatively low cost.

To place these observations in a broader context, we provide a theoretical discussion of how to allocate resources to success-or-fail solvers under a fixed budget. We model this problem as an integer program, solve a linear relaxation and arrive at the following rule of thumb.
\begin{roundnote}
  The cost-optimal solver with success probability $p$ and cost $c$ dollars is the one that minimizes the log failure likelihood per dollar
\begin{equation}\label{eq:GNnqtn}
\text{minimize} \quad \frac{\ln (1 - p)}{c}.
\end{equation}
When the goal is to maximize the final probability of solving the problem under fixed (large enough) budget $C$, one should run the cost-optimal solver many times independently (until success is reached) rather than use any other solver.
\end{roundnote}

The problem of maximizing the system reliability under budget constraints is well-studied \citep{guan2025review}.
Although the proof method is not new,
we are not aware of literature applying it in similar contexts and analyzing the resulting scalar metric of accuracy-cost tradeoff.
In \cref{sec:related-work}, we extensively discuss works that studied other intuitively motivated metrics
or different settings.
This provides a new perspective on the scaling science of agent systems, introducing a
simple and mathematically correct method of comparing agent systems based on their accuracy and cost rather than accuracy alone.
As an extreme case, a task can be agentic \citep{kim2025towards} in the sense that adaptive interaction with the environment benefits the expected success rate, but a zero-interaction approach can still be more cost-efficient.

In particular, if one API call is more cost-efficient in the sense of \eqref{eq:GNnqtn} than an agent run, one should repeat API calls independently ($k$-shot). Further, stateful searches like agent runs have hyperparameters (such as the model softmax temperature at inference, maximal cost per attempt), and the best ones in the sense of \eqref{eq:GNnqtn} should be chosen to minimize the total costs. This goes against the naive intuition that longer searches are always better (in fact, some finite search length can be the best).

To apply this framework, we estimate success and cost scaling trends for independent sampling and agentic runs with different cost limits (\cref{fig:scaling}). Consistent with the evaluations discussed above and \cref{fig:overview}, an API call is more cost-efficient in our setting. Further, the cost-optimal cost limit per attempt for an agent is finite in our experiment. The details on the scaling trends are discussed in \cref{sec:broader-discussion}.

Our main contributions are:

\begin{itemize}
\item \textbf{Systematic comparison of inference regimes.}
  We empirically compare $k$-shot inference, agent-based inference, and budget-partitioned agents under matched cost and query constraints across multiple model families and difficulty levels.\\ \emph{Key finding:} $k$-shot inference consistently achieves a superior accuracy-cost tradeoff, often matching or exceeding agent performance at a fraction of the cost (\cref{fig:cum-solved-vs-cost-o3,fig:cum-solved-vs-queries-o3}).

\item \textbf{Failure mode analysis.} Through trajectory-level analysis, we identify common agent weaknesses, such as inefficient iteration and ineffective debugging, that explain their poor cost efficiency in this setting.

\item \textbf{Principled cost-accuracy tradeoff metric.}
We study competitive programming performance as a continuous function of inference cost, rather than at a single operating point.
Accordingly, we introduce a simple metric (log-failure likelihood per dollar)
that captures the accuracy vs. cost tradeoff in a theoretically principled way rather than just being intuitively plausible (\cref{sec:broader-discussion}).

\item \textbf{Empirical scaling trends.}
We rigorously estimate scaling trends (\cref{fig:scaling}) of $k$ independent attempts and agents with different cost budgets in our competitive programming setting (controlling for uncertainty by enforcing an upper bound on the length of exact confidence intervals).

\end{itemize}



\section{Methods}

\paragraph{Problem set.}
We evaluate on a representative suite of Codeforces problems drawn from 10 contests spanning Divisions~1,~2, and~3.
Each contest contains 6--8 problems ordered (approximately) by difficulty, yielding a total of 216 problems across the three divisions.
This design provides broad coverage over problem difficulty while keeping the evaluation tractable and reproducible.

\paragraph{Evaluation objective and budgets.}
For each problem, we measure the cumulative number of solved problems as a function of inference resources, under two complementary budget regimes:
(i) a \emph{monetary} budget, and (ii) a \emph{query-count} budget.
Concretely, we cap per-problem spending at $c_{\max}$ USD and separately cap the number of model calls at $k_{\max}$.
Unless otherwise noted, we set $c_{\max}=2.0$ (USD per problem) and $k_{\max}=50$ (model calls per problem), matching a realistic deployment scenario with strict per-instance constraints.
All experiments use temperature 0.5 when available (and the provider default otherwise).
Prompts and system instructions are provided in \cref{sec:prompts}.

\paragraph{Inference settings.}
We compare three inference strategies that differ in how they allocate a fixed budget between \emph{exploration} (independent attempts) and \emph{exploitation} (iterative refinement with tool feedback).
In all settings, a problem is marked as solved if the produced solution is accepted by the Codeforces judge under the standard evaluation protocol.

\paragraph{$k$-shot inference (independent attempts).}
This is the simplest baseline.
Each attempt corresponds to a single API call that produces a complete solution (including explanation and final code).
Under the monetary budget regime, we repeatedly sample independent attempts until the cumulative cost reaches $c_{\max}$, which yields approximately
$\left\lceil c_{\max} / \mathrm{cost}(\text{one call}) \right\rceil$ attempts.
Under the query-count regime, we run exactly $k_{\max}$ attempts (excluding transient API errors and automatic retries).
Among the sampled attempts, we select the best candidate according to the evaluation protocol (i.e., the first accepted submission, if any).

\paragraph{SWE-agent (single trajectory with tools).}
To instantiate an agentic programming workflow, we use SWE-agent \citep{NEURIPS2024_5a7c9475} as an off-the-shelf agent framework.
For each problem, we run a single SWE-agent trajectory with an end-to-end budget of $c_{\max}$.
The agent is granted access to a terminal for iterative coding, execution, and debugging, and may invoke the model multiple times within the budget.
Prompt caching is enabled throughout to amortize repeated context across successive calls.

\paragraph{Budget-partitioned SWE-agent ($\nicefrac{1}{3}\times 3$).}
To disentangle the effect of \emph{multiple independent runs} from \emph{within-trajectory refinement}, we also evaluate a partitioned variant of the agent.
Here we run SWE-agent three times independently on the same problem, each with budget $c_{\max}/3$.
We then aggregate across runs in the same way as in $k$-shot inference (a problem is solved if any run produces an accepted solution).
This setting tests whether distributing a fixed budget across multiple agent initializations can recover the benefits of exploration while retaining tool use. We created a SWE-agent fork with custom scaffolding and tool calling specific for competitive programming.

\paragraph{Implementation details and reporting.}
For cost-based comparisons, we record the cumulative monetary cost incurred as the evaluation progresses, and report the number of problems solved at each cost threshold.
For query-based comparisons, we analogously report the number of problems solved as a function of the number of model calls.
These two views allow us to separate \emph{per-call effectiveness} from \emph{per-dollar efficiency}, which is critical when comparing methods that differ substantially in orchestration overhead and tool usage patterns. Our evaluation of solutions uses a dedicated new  Codeforces API (currently private) designed for AI research. 


\begin{pycontext}
import os
import sys
os.chdir("../code")
sys.path.insert(0, os.getcwd())
from report import *
\end{pycontext}

\section{Results and Interpretation}
\label{sec:results}

\begin{figure}[htb!]
  \centering

  \includegraphics[width=0.900\textwidth]{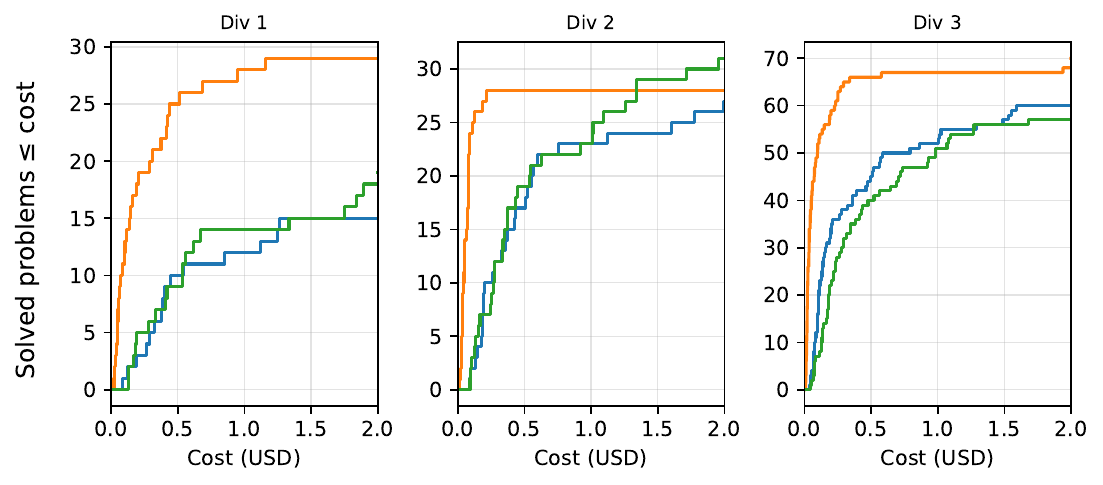}

  \caption{Cumulative solved problems versus inference cost with OpenAI o3 for: {\color{mplblue}agent}, {\color{mplorange}$k$-shot}, {\color{mplgreen}budget-partitioned agent ($\nicefrac{1}{3}\times 3$)}.  \cref{sec:additional-results} contains results for additional models.}
  \label{fig:cum-solved-vs-cost-o3}
\end{figure}

Figure~\ref{fig:cum-solved-vs-cost} reports the cumulative number of solved problems as a function of inference cost.
Across all evaluated model families and Codeforces difficulty levels (Divisions~1--3), \kshot consistently dominates agent-based approaches when normalized by monetary cost.
In particular, for any fixed budget threshold, \kshot solves strictly more problems than both a single SWE-agent run and the budget-partitioned SWE-agent ($\nicefrac{1}{3} \times 3$) variant.
This trend holds uniformly across easy, medium, and hard problems, indicating that the advantage of \kshot is not confined to a narrow difficulty regime.

\begin{figure}[t]
    \centering    \includegraphics[width=0.90\textwidth]{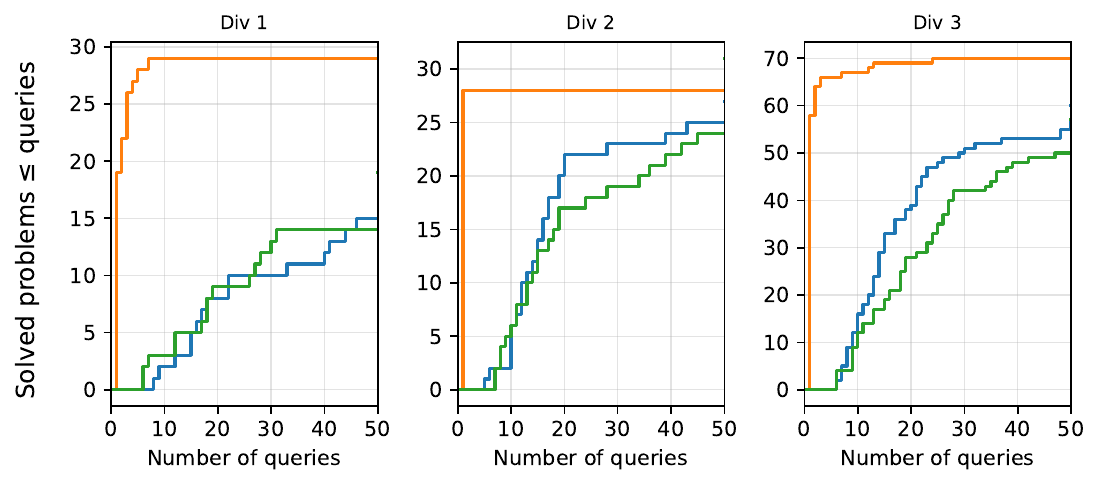}
    \caption{Cumulative solved problems versus number of queries with OpenAI o3 for: {\color{mplblue}agent}, {\color{mplorange}$k$-shot}, {\color{mplgreen}budget-partitioned agent ($\nicefrac{1}{3}\times 3$)}.  \cref{sec:additional-results} contains additional models.}
    \label{fig:cum-solved-vs-queries-o3}
\end{figure}

Importantly, the observed gap is not an artifact of agents issuing a larger number of model calls.
Figure~\ref{fig:cum-solved-vs-queries} plots performance as a function of the number of model queries, independent of their monetary cost.
Even under this normalization, agent-based methods substantially lag behind \kshot, demonstrating that their inferior cost efficiency cannot be explained solely by higher per-call overhead.
Instead, agents are less effective \emph{per model invocation}, despite benefiting from iterative refinement and tool access.

Notably, this result persists even though SWE-agent employs prompt caching, so successive model calls do not incur costs that scale linearly with the growing context length \citep{NEURIPS2024_5a7c9475}.
Thus, the performance gap cannot be attributed to an unfavorable accounting of context length or token usage.

Across models, we observe that the relative disadvantage of agents is most pronounced under tight budgets, while the gap narrows only marginally as the budget increases.
This suggests that, under realistic deployment constraints, allocating inference resources to multiple independent samples is substantially more effective than investing them into deeper agent trajectories.
These findings complement prior observations in multi-agent settings, where much of the empirical benefit arises from aggregation and diversity rather than prolonged deliberation \citep{debateOrVote,madSparseTopology}.

Overall, the results demonstrate that for competitive programming, \kshot inference achieves a markedly superior accuracy--cost and accuracy--query tradeoff compared to agent-based approaches, motivating a closer examination of agent failure modes in the following section.




\section{Agent Trajectory Analysis}
Figures~\ref{fig:cum-solved-vs-cost} and~\ref{fig:cum-solved-vs-queries-o3} demonstrate that, when performance is normalized by either monetary cost or number of model calls, \kshot inference, based on repeated independent model invocations, consistently outperforms both agent-based methods and \agentk{} across all evaluated settings.

At first glance, this finding may appear counterintuitive. Agent-based approaches have shown strong performance in complex decision-making and interactive reasoning tasks, where iterative refinement, tool use, and feedback are critical \citep{yao2023react,li2023theoryofmind}. One might therefore expect such methods to dominate simpler repeated-sampling strategies in competitive programming as well.

To reconcile this discrepancy, we conduct a detailed analysis of agent trajectories to identify systematic failure modes. In particular, we use an LLM-based judge, GPT-4.1, to categorize and assess agent behaviors across unsuccessful runs.

Our analysis reveals several recurring weaknesses in agent-based reasoning, summarized in Table~\ref{tab:weakness-summary}. A more detailed description of these weaknesses, including examples, can be found in Section~\ref{sec:agent-weakness-detailed}.
\begin{table}[h]
\centering
\small
\begin{tabular}{|p{0.25\linewidth}|c|p{0.55\linewidth}|}
\hline
\textbf{Weakness Category} & \textbf{Frequency} & \textbf{Description} \\
\hline
Algorithmic Inefficiency \& Time Complexity Issues & 19.3\% &
Failure to design efficient algorithms, leading to brute-force, high-polynomial, or exponential solutions where more efficient approaches are required. \\
\hline
Iterative Loop Without Breakthrough & 7.4\% &
Repeated cycling through similar incorrect strategies with incremental tweaks, without reconsidering the core approach. \\
\hline
Ineffective Debugging \& Problem-Solving Strategy & 7.0\% &
Excessive focus on peripheral issues and ineffective use of feedback, rather than structured reasoning and strategic reassessment. \\
\hline
Missing Mathematical / Theoretical Insight & 6.7\% &
Failure to identify key mathematical properties, combinatorial insights, or theoretical foundations needed for efficient solutions. \\
\hline
Fundamental Problem Misunderstanding & 0.9\% &
Misinterpretation of core problem constraints or mechanics, resulting in solutions that do not address the actual task. \\
\hline
\end{tabular}
\caption{Primary categories of reasoning and problem-solving failures observed in agents.}
\label{tab:weakness-summary}
\end{table}



\section{Scaling Solvers Under Fixed Budgets}\label{sec:broader-discussion}

Consider trying to solve a problem using $n$ solvers, where running the $i$th solver costs $c^{(i)} > 0$ (dollars) and has success probability $p^{(i)} \in (0, 1)$. We can run each solver as many times as we wish independently. Given a fixed budget $C$ (dollars), our goal is to maximize the final probability of success.

If we chose to run the first solver $k^{(1)}$ times, the second solver $k^{(2)}$ times and so on, the failure probability would be
\begin{equation*}
(1 - p^{(1)})^{k^{(1)}} \ldots (1 - p^{(n)})^{k^{(n)}}
\end{equation*}
and the cost would be
\begin{equation*}
k^{(1)} c^{(1)} + \ldots + k^{(n)} c^{(n)}.
\end{equation*}
So the goal becomes
\begin{equation}\label{eq:dYSpYm}
\begin{aligned}
  \text{maximize}& \quad - k^{(1)} \ln (1 - p^{(1)}) - \ldots - k^{(n)} \ln (1 - p^{(n)})\\
  \text{s.\,t.}& \quad k^{(1)} c^{(1)} + \ldots + k^{(n)} c^{(n)} \leq C,\\
  &\quad k^{(1)}, \ldots, k^{(n)} \in \mathbb{Z}_{\geq 0}.
\end{aligned}
\end{equation}
This is an unbounded knapsack problem \citep{kellerer2004knapsack}. When $C$ is large compared to each $c^{(i)}$, we can use a linear programming relaxation of this problem where $k^{(i)}$ do not have to be integers.

\begin{proposition}
The solution of the LP-relaxation to \eqref{eq:dYSpYm},
\begin{equation}\label{eq:ebYNit}
\begin{aligned}
  \text{maximize}& \quad - k^{(1)} \ln (1 - p^{(1)}) - \ldots - k^{(n)} \ln (1 - p^{(n)})\\
  \text{s.\,t.}& \quad k^{(1)} c^{(1)} + \ldots + k^{(n)} c^{(n)} \leq C,\\
                 &\quad k^{(1)}, \ldots, k^{(n)} \geq 0,
\end{aligned}
\end{equation}
is given by
\begin{equation*}
k^{(j)} = \frac{C}{c^{(j)}}, \quad k^{(1)} = \ldots = k^{(j - 1)} = k^{(j + 1)} = \ldots = k^{(n)} = 0,
\end{equation*}
where
\begin{equation*}
j \in \argmax_i \frac{- \ln (1 - p^{(i)})}{c^{(i)}}.
\end{equation*}
\end{proposition}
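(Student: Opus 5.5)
The argument is a direct weighted-sum bound of the kind used for the fractional knapsack problem. I will write $a^{(i)} = -\ln(1 - p^{(i)})$, which is strictly positive because $p^{(i)} \in (0,1)$, and set $r^{(i)} = a^{(i)}/c^{(i)}$. The index $j$ is a maximizer of $r^{(i)}$. I will first check feasibility of the proposed point. All coordinates other than the $j$th are zero, and $k^{(j)} = C/c^{(j)} \geq 0$. The cost is $k^{(j)} c^{(j)} = C$, so the budget constraint holds with equality. The objective value at this point is $a^{(j)} C / c^{(j)} = r^{(j)} C$.

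Next I will bound the objective at an arbitrary feasible point $(k^{(1)}, \ldots, k^{(n)})$ of \eqref{eq:ebYNit}. For each $i$ I rewrite the $i$th term as $a^{(i)} k^{(i)} = r^{(i)} \cdot c^{(i)} k^{(i)}$. Since $c^{(i)} k^{(i)} \geq 0$ and $r^{(i)} \leq r^{(j)}$, each term is at most $r^{(j)} c^{(i)} k^{(i)}$. Summing over $i$ and using the budget constraint gives
\begin{equation*}
\sum_i a^{(i)} k^{(i)} \leq r^{(j)} \sum_i c^{(i)} k^{(i)} \leq r^{(j)} C.
\end{equation*}
This matches the value attained by the proposed point, so that point is optimal.

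If the statement is also meant to describe the optimal set, I will add a short remark. Equality in the chain above forces two things. First, the budget must be spent fully, because $r^{(j)} > 0$. Second, $k^{(i)} = 0$ whenever $r^{(i)} < r^{(j)}$. So when ties occur, any split of the budget among the maximizers of $r^{(i)}$ is also optimal. This is consistent with the proposition, which writes $j \in \argmax$ and only asserts that the given point is a solution.

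The only step that needs care is the sign bookkeeping. The bound uses $c^{(i)} > 0$, $k^{(i)} \geq 0$, and $a^{(i)} > 0$. The last of these is what makes spending the whole budget beneficial, and it is why the maximizer of $-\ln(1-p)/c$ coincides with the minimizer of $\ln(1-p)/c$ in \eqref{eq:GNnqtn}. Beyond this I expect no real obstacle. As an alternative, one could invoke LP duality with the single dual variable $y = r^{(j)}$, but the direct inequality is shorter.
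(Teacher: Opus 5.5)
Your proof is correct and follows the same argument as the paper's. Both write each term as $\frac{-\ln(1-p^{(i)})}{c^{(i)}}\cdot c^{(i)}k^{(i)}$, bound each ratio by the maximal one, and then apply the budget constraint (the paper notes this standard fractional-knapsack fact with a citation); your explicit feasibility check and the characterization of the optimal set under ties are small additions that the paper leaves implicit.
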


\begin{proof}
This is well-known, see e.\,g. Lemma 8.1.1 in \citet{kellerer2004knapsack}.
By definition of $j$, we have
\begin{align*}
  &- \bar{k}^{(1)} \ln (1 - p^{(1)}) - \ldots - \bar{k}^{(n)} \ln (1 - p^{(n)})\\
  &\quad = \bar{k}^{(1)} c^{(1)} \frac{- \ln (1 - p^{(1)})}{c^{(1)}} + \ldots + \bar{k}^{(n)} c^{(n)} \frac{- \ln (1 - p^{(n)})}{c^{(n)}}\\
  &\quad \leq \bar{k}^{(1)} c^{(1)} \frac{- \ln (1 - p^{(j)})}{c^{(j)}} + \ldots + \bar{k}^{(n)} c^{(n)} \frac{- \ln (1 - p^{(j)})}{c^{(j)}}\\
  &\quad = (\bar{k}^{(1)} c^{(1)} + \ldots + \bar{k}^{(n)} c^{(n)}) \frac{- \ln (1 - p^{(j)})}{c^{(j)}}\\
  &\quad \leq C \frac{- \ln (1 - p^{(j)})}{c^{(j)}} = k^{(j)} \brk{- \ln (1 - p^{(j)})}
\end{align*}
for any $\crl{\bar{k}^{(i)}}_{i = 1}^n$ in the feasibility set.
\end{proof}

This motivates the following definition.

\begin{roundnote}
\begin{definition}
  For a method with success probability $p$ and attempt cost $c$, define its \textit{negative log failure likelihood per dollar} by
\begin{equation*}
\frac{- \ln (1 - p)}{c}.
\end{equation*}
\end{definition}
\end{roundnote}

Thus, this quantity is what matters for choosing which methods to apply for maximizing the success probability under a fixed budget.

Further, a solver can consist of, say, $q$ queries to an agent or some number of independent runs of the model. The following \lcnamecref{prop:cost_tradeoff} develops the scaling law that informs when to choose one versus the other.

\begin{proposition}
Assume one query of the model has success probability $p_1$ and cost $c_1$, while an agent using $q \geq 2$ API calls
has success probability $\pi_q$ and cost $\zeta_q$ that has a power-law dependence on $q$: $\zeta_q = a + b q^{\gamma}$, where $\gamma > 1$, and $a, b \in \mathbb{R}$. The agent's negative log failure likelihood per dollar is higher than that of  $k$ independent runs if and only if
\begin{equation}\label{eq:xOvrog}
\begin{aligned}
  &\frac{- \ln (1 - \pi_q)}{- \ln (1 - p_1)} \geq a' + b' q^{\gamma}, \quad\text{equivalently} \quad \pi_q \geq 1 - (1 - p_1)^{a' + b' q^{\gamma}},\\
  &\text{where} \quad a' := \frac{a}{c_1}, \quad b' := \frac{b}{c_1},
\end{aligned}
\end{equation}
regardless of $k$.
\label{prop:cost_tradeoff}
\end{proposition}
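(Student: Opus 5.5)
The plan is to reduce both sides of the comparison to the scalar metric from the Definition, the negative log failure likelihood per dollar, and then rearrange the resulting inequality. The first step is to observe that $k$ independent runs of the single query form a solver with failure probability $(1-p_1)^k$ and cost $k c_1$. Its metric is therefore
\begin{equation*}
\frac{-\ln\bigl((1-p_1)^k\bigr)}{k c_1} = \frac{-k\ln(1-p_1)}{k c_1} = \frac{-\ln(1-p_1)}{c_1}.
\end{equation*}
This value does not depend on $k$, which is where the phrase ``regardless of $k$'' comes from. Equivalently, the LP relaxation in the preceding Proposition treats $k$ repetitions of a solver as scaling the objective and the cost by the same factor.

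The second step is to write the agent's metric as $-\ln(1-\pi_q)/\zeta_q$. The claimed condition is that this is at least $-\ln(1-p_1)/c_1$; I read ``higher'' as non-strict, to match the $\geq$ in \eqref{eq:xOvrog}.

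The third step is the rearrangement. Since $p_1\in(0,1)$, we have $-\ln(1-p_1)>0$, so we may divide by it without flipping the inequality. We also multiply by $\zeta_q$, and this is the one point that needs care: it requires $\zeta_q>0$. This holds because $\zeta_q$ is the cost of an actual solver, and the setup of this section assumes solver costs are positive; we state this as the standing assumption, since $a,b\in\mathbb{R}$ alone do not guarantee it. The inequality then becomes
\begin{equation*}
\frac{-\ln(1-\pi_q)}{-\ln(1-p_1)} \geq \frac{\zeta_q}{c_1} = \frac{a}{c_1}+\frac{b}{c_1}q^{\gamma} = a'+b'q^{\gamma}.
\end{equation*}
Every step is reversible, so this gives the ``if and only if''.

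For the equivalent form, multiply both sides by $\ln(1-p_1)<0$, which reverses the inequality, to get $\ln(1-\pi_q)\leq (a'+b'q^{\gamma})\ln(1-p_1)$. Exponentiating is monotone, so this is equivalent to $1-\pi_q\leq (1-p_1)^{a'+b'q^{\gamma}}$, i.e.\ $\pi_q\geq 1-(1-p_1)^{a'+b'q^{\gamma}}$.

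The only real obstacle is bookkeeping of signs and positivity, namely $\zeta_q>0$ and the direction of the inequality when dividing by the negative logarithm. The hypotheses $\gamma>1$ and $q\geq2$ play no role in the equivalence; they only shape how the threshold on the right-hand side grows with $q$.
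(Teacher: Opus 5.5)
Your proposal is correct and follows the paper's proof: you show the $k$-run metric equals $-\ln(1-p_1)/c_1$, independent of $k$, and rearrange its comparison with $-\ln(1-\pi_q)/\zeta_q$. The paper compresses the rearrangement into ``rewriting the condition''; your explicit checks that $\zeta_q>0$ must be assumed and that dividing by $-\ln(1-p_1)>0$ preserves the inequality direction fill in what it leaves implicit.
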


\begin{proof}
The agent's negative log failure likelihood per dollar is $- \ln (1 - \pi_q) / \zeta_q$ by definition. Running the model $k$ times independently costs $k c_1$ and has success probability
\begin{equation*}
1 - (1 - p_1)^k,
\end{equation*}
so the corresponding negative log failure likelihood per dollar is
\begin{equation*}
\frac{- k \ln (1 - p_1)}{k c_1} = \frac{- \ln (1 - p_1)}{c_1},
\end{equation*}
not depending on $k$.
Rewriting the condition
\begin{equation*}
\frac{- \ln (1 - \pi_q)}{a + b q^{\gamma}} \geq \frac{- \ln (1 - p_1)}{c_1}
\end{equation*}
yields \eqref{eq:xOvrog}.
\end{proof}

Proposition~\ref{prop:cost_tradeoff} formalizes the intuition that, when the cost outraces the agent's negative log failure likelihood per dollar, it is more cost-efficient to use \kshot.

In practice, the fast success probability growth condition \eqref{eq:xOvrog} appears to be difficult to satisfy.
For small $q$, the agent's cost $\zeta_q$ is smaller than the corresponding $q$-shot cost $q c_1$ because the model does the easy work of figuring out its environment
(such as current directory, file contents and so on), as opposed to solving a problem in one query.
However, the cost scaling exponent $\gamma$ can be much higher than $1$ because of context growth.

\paragraph{Scaling trends in practice.}
\Cref{fig:scaling} shows the
what the scaling trends look like in reality. We choose a sufficiently easy problem in Division 3, which Claude Sonnet 4 (with final softmax temperature 0.5) solves in one attempt with probability $p_1$ estimated at just under $30\%$. This means that the success probability of $k$-shot sampling $1 - (1 - p_1)^k$ converges to one exponentially, with log failure likelihood $k \ln (1 - p_1)$ diverging to $- \infty$ linearly, and negative log failure likelihood per dollar constant:

\begin{equation*}
\frac{k \ln (1 - p_1)}{k c_1} = \frac{\ln (1 - p_1)}{c_1}.
\end{equation*}
We see that the success likelihood of an agent (with the same model and temperature) converges to one more slowly when matching the cost, and correspondingly the agent's negative log failure likelihood per dollar
is much lower for agents.
Interestingly, the agent's success likelihood scales near-linearly in the budget, whereas the cost
grows super-linearly in the number of queries.
All calculated probabilities are estimated with Clopper-Pearson 95\% confidence intervals of length 0.1.

\begin{figure}[h!]
\centering
\begin{tabular}{cc}
\begin{subfigure}[t]{0.48\textwidth}
\includegraphics[width=\linewidth]{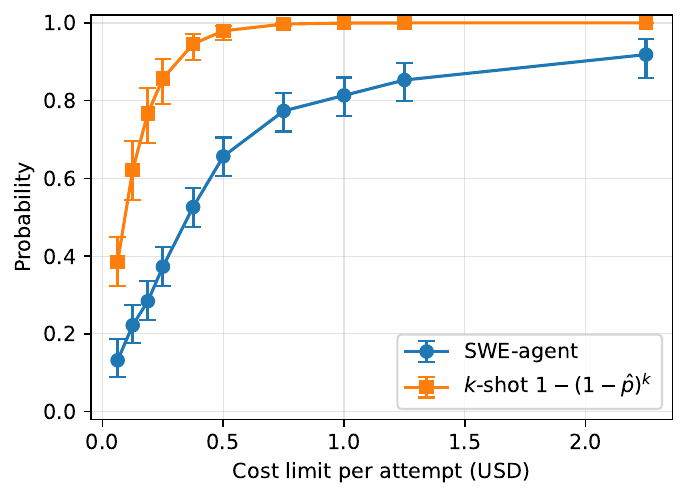}
\caption{The scaling curve of the success probability}
\end{subfigure}
&
\begin{subfigure}[t]{0.48\textwidth}
\includegraphics[width=\linewidth]{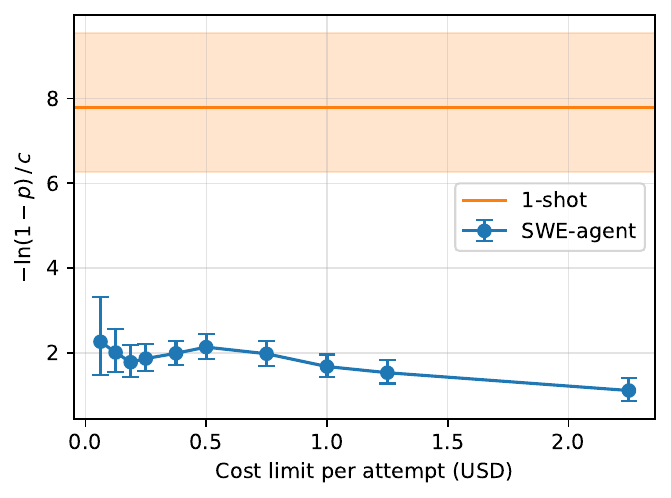}
\caption{The negative log failure likelihood per dollar is much lower for agents compared to a one-shot attempt\label{fig:neg-log-fail-likel-per-dollar}}
\end{subfigure}
\\
\begin{subfigure}[t]{0.48\textwidth}
\includegraphics[width=\linewidth]{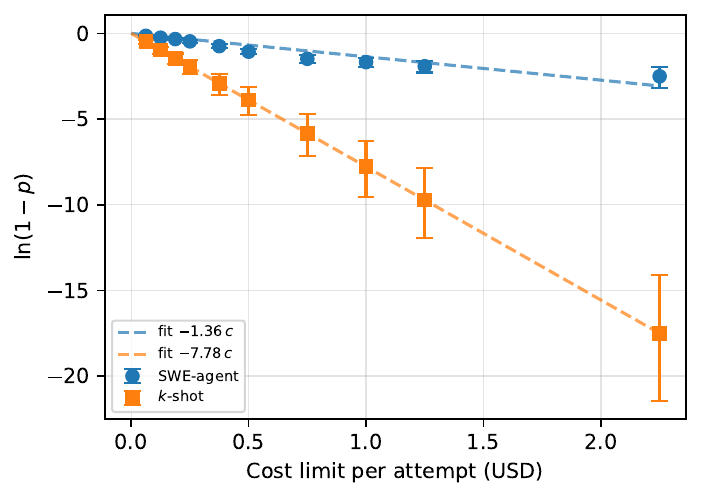}
\caption{The log failure likelihood as a function of the budget scales near-linearly for agents as well $k$-shot attempts}
\end{subfigure}
&
\begin{subfigure}[t]{0.48\textwidth}
\includegraphics[width=\linewidth]{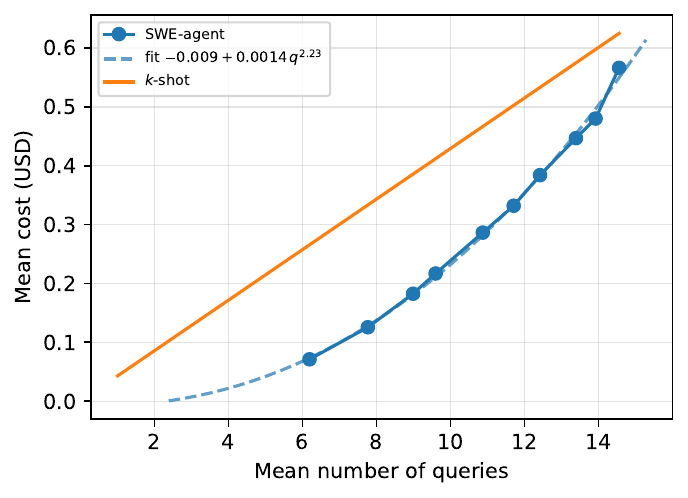}
\caption{The best power law fit of the agent's cost given $q$ queries is near-quadratic in $q$}
\end{subfigure}
\end{tabular}
\caption{Scaling trends of $k$-shot attempts vs. agents, on a Division 3 problem. We see that SWE-agent is less cost-efficient by our metric. Interestingly, the log-failure probability is linear in the cost limit for the agents (as well independent attempts). See \cref{sec:additional-results} for a similar plot with a much harder problem.}
\label{fig:scaling}
\end{figure}

\paragraph{Broader adaptability of scaling law in \cref{prop:cost_tradeoff}.} More broadly, this discussion motivates cutting short
any stateful search with $q \geq 2$ moves, success probability $\pi_q$ and cost $\zeta_q$ (e.\,g. an agent with $q$ API queries)
for cost efficiency.
Specifically,
the value $q^{\star}$ maximizing
the negative log failure likelihood per dollar
\begin{equation*}
q^{\star} = \argmax_{q \geq 1} \frac{- \ln (1 - \pi_q)}{\zeta_q}
\end{equation*}
(assume uniqueness for simplicity)
can be considered the cost-optimal (analogously to compute-optimal) number of moves. If such a number is possible to find, one should not give the search more than $q^{\star}$ moves; instead, independent searches with exactly $q^{\star}$ moves each should be repeated.

In particular, with less reasoning-intensive and more interactive tasks (where agents are strictly necessary), this framework helps decide the cost-optimal amount of resources one agent replica should be allocated.





\section{Related Work}\label{sec:related-work}


\paragraph{Multi-agent collaboration and debate.}
A growing body of work studies multi-agent collaboration as a means of improving the reasoning capabilities of large language models.
Multi-Agent Debate (MAD) proposes that allowing agents to propose, critique, and refine solutions can elicit more robust reasoning and reduce individual errors \citep{MAD}.
Subsequent empirical analyses, however, suggest that much of the observed gain in MAD-style systems arises from aggregation rather than sustained deliberation.
In particular, \citet{debateOrVote} demonstrate that majority voting over independent agent outputs accounts for most of the performance improvements attributed to debate.
This observation closely aligns with our findings, where repeated independent sampling (\kshot) consistently outperforms agent-based approaches under fixed inference budgets.

Further work has examined the role of communication structure in multi-agent systems.
\citet{madSparseTopology} show that restricting communication to sparse topologies can outperform fully connected debate graphs, indicating that excessive interaction may be counterproductive.
Together, these results suggest that unconstrained agent interaction can introduce inefficiencies, especially when inference-time resources are limited.

\paragraph{Agentic reasoning and tool use.}
Beyond debate-style collaboration, agentic reasoning frameworks emphasize interleaving reasoning with actions such as tool invocation and environment interaction.
ReAct \citep{yao2023react} formalizes this paradigm and demonstrates strong gains on tasks that require iterative decision-making.
Related work on theory-of-mind modeling further highlights how agents can reason about other agents’ beliefs and intentions in collaborative settings \citep{li2023theoryofmind}.
Agent-based systems have been particularly successful in software engineering benchmarks, such as SWE-bench, where agents must navigate large codebases, edit files, and run tests \citep{jimenez2024swebench,NEURIPS2024_5a7c9475}.

In contrast, competitive programming problems are typically self-contained and emphasize algorithmic insight rather than environment exploration.
Our results show that, under realistic inference budgets, the overhead associated with iterative agent trajectories often outweighs their benefits in this domain.


\paragraph{Accuracy-cost tradeoffs in agent systems.}
A growing line of work argues that agent evaluations should account for inference cost rather than accuracy alone.
Most closely related, \citet{kapoor2025ai} argue that agent evaluations must be cost-controlled
and advocate visualizing cost-accuracy tradeoffs using Pareto frontiers.
Our empirical results complement this perspective on a different class of programming tasks.
We further refine their theoretical perspective
by specializing the cost-aware view to a particular deployment objective: maximizing final success probability under a fixed budget when solvers can be retried independently.
For this objective, we prove rigorously that the relevant scalar is the reliability exponent per dollar $-\ln(1-p)/c$, thus capturing accuracy and cost in a principled way.
In addition, we address their concerns of lacking error bars in agent evaluations by including
exact confidence intervals in our scaling figures (\cref{fig:scaling}).
\Citet{kim2025towards} account for costs by considering ``success per 1000 tokens'' and cost-normalizing performance metrics, guided by intuitions we take further by assuming a system reliability perspective. Hence, we provide a theoretically principled complement to the scaling science. Similarly, the work \citep{erol2026costofpass} proposes a different though related metric ``cost of pass'' which would effectively be $c / p$ in the setting of \eqref{eq:GNnqtn}. This metric and ours agree to first order when $p$ is small (since $- \ln(1 - p) \approx p$), but differ for higher-success solvers.

\paragraph{Budget-aware inference-time scaling.}
Our work is also related to budget-aware evaluations of inference-time reasoning and repeated sampling.
\Citet{wang2024reasoningtoken} evaluate reasoning strategies under matched query, token, and monetary budgets, finding that simple self-consistency baselines can outperform more complex strategies once compute is controlled, and that some strategies such as multi-agent debate or Reflexion can become worse when compute is fixed.
\Citet{brown2024large} study repeated sampling as an inference-time scaling method, showing that log-coverage scales at a power law with the number of generated samples, especially in domains such as coding and formal proofs where candidates can be automatically verified.
\Citet{zhang2025scaling} propose OSCA, which learns mixed allocations of sample compute across inference configurations and can outperform the best single configuration under benchmark-level compute budgets.
Their benchmark-level pass@$C$ objective averages the success probability
over problem instances, where different configurations can specialize on different instances.
This heterogeneity can make mixed allocations optimal, even in the continuous relaxation.
In contrast, our reliability analysis is for a fixed success-or-failure problem with fixed per-attempt probabilities and costs in dollars, allowing for a simple metric which incorporates both accuracy and real cost.

\paragraph{Competitive programming with LLMs.}
Several recent studies have directly examined the performance of LLMs on competitive programming.
Zheng et al.~\citep{zheng2025livecodebenchproolympiadmedalists} evaluate LLMs on difficult contest problems and observe a substantial gap between benchmark performance and results reported in model announcements.
Other works evaluating reasoning models on competitive programming problems given one or multiple independent attempts include \citet{quan2025codeelobenchmarkingcompetition,li2025humanityslast,wei2025evaluatingimprovinglarge}.
OpenAI et al.~\citep{openai2025competitiveprogramminglargereasoning} demonstrate that reasoning-focused models with terminal access can discover sophisticated test-time strategies, such as using brute-force solvers for validation.
\Citet{yang2025elaborationcomprehensivebenchmark} use LLM-based simulators of human feedback.
\Citet{xu2025icpceval} evaluate reasoning models on ICPC problems, and introduce a metric Refine@K which allows the model to iteratively refine its solution within a budget of $k$ attempts.
While these results highlight the potential of agentic interaction, they largely assume generous inference budgets.
Our work complements these studies by explicitly focusing on the cost--accuracy tradeoff and showing that simpler inference strategies can dominate under strict budget constraints.






\section{Conclusion and Future Directions}

We investigated how to allocate inference-time compute for competitive programming under fixed budget constraints.
By evaluating performance as a function of both monetary cost and number of model calls, we showed that repeated independent sampling (\kshot) consistently outperforms agent-based approaches across models and difficulty levels.

This gap is not explained by accounting effects such as longer contexts or higher per-call costs.
Rather, it reflects a mismatch between agentic iteration and the structure of competitive programming tasks: agents often expend substantial budget on unproductive refinement, while \kshot benefits from broad exploration that surfaces correct solutions early.

Our findings do not diminish the value of agent-based methods in interactive domains such as software engineering. Instead, they highlight that inference strategies should be matched to task structure and resource constraints. For self-contained algorithmic problems, simple exploration via independent sampling provides a superior accuracy--cost tradeoff.

A limitation of this study is that it evaluates only a single agent framework, SWE-agent. Although we customize its prompts and scaffolding, the observed gap may still underestimate the potential of agentic methods that are more deeply tailored to competitive programming.
It is an exciting future direction to test our setting with multi-agent collaboration.

We hope our work encourages cost-aware evaluation of inference strategies and informs the design of efficient reasoning systems under realistic deployment constraints.

\section*{Acknowledgments}

We extend our very special gratitude to Mike Mirzayanov, the creator and maintainer of Codeforces, who
kindly provided the opportunity to run evaluations through a private research API.
This project would not have happened without him.
We also very much thank Kilian Lieret for important discussions, including the ones on the framing, as well as contributions towards the custom SWE-agent fork and \kshot implementation. 



\bibliographystyle{plainnat}
\bibliography{infinite}


\newpage
\appendix

\begin{internal}
\section{INTERNAL experiment notes}

\subsection{Estimating probabilities}

I take Contest 2060 (Div. 3), problem B, run Sonnet a few hundred times to estimate $p_1$ with the $95\%$ confidence interval of length about $0.1$. The script is \Verb{exps/launch_probability_estimation.py}.

I run agents with different cost limits and also estimate the probability until the Clopper-Pearson exact confidence interval has length 0.1. The script is \Verb{exps/launch_agent_probability_estimation.py}.
The logs are in the \Verb{LOG_DIR} in this file, specifically \Verb{agent_trajectories/logs}.
The actual trajectories are saved in \Verb{trajectories/borshigida} of the SWE-agent path as usual.

\begin{verbatim}
python3 -c 'from exps.launch_probability_estimation import main; main(div=3, contest_id="2060", problem_index="B")'
python3 -c 'from exps.launch_agent_probability_estimation import main; main(div=3, instance_id="2060B", cost_limit=0.0625)'
python3 -c 'from exps.launch_agent_probability_estimation import main; main(div=3, instance_id="2060B", cost_limit=0.125)'
python3 -c 'from exps.launch_agent_probability_estimation import main; main(div=3, instance_id="2060B", cost_limit=0.1875)'
python3 -c 'from exps.launch_agent_probability_estimation import main; main(div=3, instance_id="2060B", cost_limit=0.25)'
python3 -c 'from exps.launch_agent_probability_estimation import main; main(div=3, instance_id="2060B", cost_limit=0.375)'
python3 -c 'from exps.launch_agent_probability_estimation import main; main(div=3, instance_id="2060B", cost_limit=0.50)'
python3 -c 'from exps.launch_agent_probability_estimation import main; main(div=3, instance_id="2060B", cost_limit=0.75)'
python3 -c 'from exps.launch_agent_probability_estimation import main; main(div=3, instance_id="2060B", cost_limit=1.00)'
python3 -c 'from exps.launch_agent_probability_estimation import main; main(div=3, instance_id="2060B", cost_limit=1.25)'
python3 -c 'from exps.launch_agent_probability_estimation import main; main(div=3, instance_id="2060B", cost_limit=2.25)'
\end{verbatim}

Harder problem:
\begin{verbatim}
python3 -c 'from exps.launch_probability_estimation import main; main(div=1, contest_id="2115", problem_index="A")'
python3 -c 'from exps.launch_agent_probability_estimation import main; main(div=1, instance_id="2115A", cost_limit=0.125)'
python3 -c 'from exps.launch_agent_probability_estimation import main; main(div=1, instance_id="2115A", cost_limit=0.1875)'
python3 -c 'from exps.launch_agent_probability_estimation import main; main(div=1, instance_id="2115A", cost_limit=0.75)'
python3 -c 'from exps.launch_agent_probability_estimation import main; main(div=3, instance_id="2060B", cost_limit=1.25)'
python3 -c 'from exps.launch_agent_probability_estimation import main; main(div=3, instance_id="2060B", cost_limit=2.25)'
\end{verbatim}
The $c = 0.125$ runs have success rate zero; for $c = 0.1875$ it is $0.0185$.
\end{internal}

\section{Main Agent Weaknesses}
\label{sec:agent-weakness-detailed}
\subsection*{1. Algorithmic Inefficiency \& Time Complexity Issues (19.3\%)}

Agents fail to develop efficient algorithms, resulting in Time Limit Exceeded errors.
They are often stuck using brute-force or exponential approaches when
polynomial or logarithmic solutions are required.
\internalComment{Is this a quote? TODO otherwise need some explanation. <--Updated.}

\begin{example}[Problem 2097B, Claude, Div.~1]
The agent relied on brute-force/backtracking approaches that are inherently exponential
and unsuitable for large constraints, failing to recognize the need for a fundamentally
different (likely combinatorial or DP) approach early enough.
\end{example}

\begin{example}[Problem 2101C, O3, Div.~1]
There was significant code churn and repeated attempts at similar greedy strategies,
often with subtle logic errors (e.g., incorrect pairing, off-by-one mistakes, or improper
feasibility checks), leading to wasted computational resources.
\end{example}

\subsection*{2. Iterative Loop Without Breakthrough (7.4\%)}

The agent cycles between similar incorrect approaches without making fundamental changes,
making incremental tweaks rather than reconsidering the core strategy.

\begin{example}[Problem 2072D, Div.~3]
It spent many steps cycling through similar $O(n^4)$ or $O(n^3)$ approaches without
successfully breaking through to a fundamentally more efficient method (e.g., using
advanced data structures or inversion delta analysis).
\end{example}

\begin{example}[Example 2 (Problem 2104B, Div.~2]
The agent fundamentally misunderstood the nature of the required optimization,
repeatedly reverting to incorrect heuristics that did not match the problem's requirements.
\end{example}

\subsection*{3. Ineffective Debugging \& Problem-Solving Strategy (7.0\%)}

The agent spends excessive time on peripheral issues, does not step back to reconsider
fundamentals, or fails to leverage feedback from test failures effectively.

\begin{example}[Problem 2104G, Div.~2]
There is a lack of clear, structured reasoning about the relationship between the graph's
structure and the coloring operation, resulting in inefficient trial-and-error rather than
principled problem solving.
\end{example}

\begin{example}[Problem 2077F, Div.~1]
The agent spent excessive effort on peripheral analyses (sum, xor, sorting, increment-only)
rather than focusing on the specific bitwise operation mechanics required by the problem.
\end{example}

\subsection*{4. Missing Mathematical/Theoretical Insight (6.7\%)}

The agent fails to identify or derive the key mathematical property, combinatorial insight,
or theoretical foundation needed for an efficient solution.

\begin{example}[Problem 2115F2, Div.~1]
It did not make the key insight needed for this problem: to use a data structure or algorithm
(such as segment trees, persistent data structures, or lazy propagation) that would allow
efficient range operations and queries.
\end{example}

\begin{example}[Problem 2114F, Div.~3]
The mathematical insight into the problem was only partially developed; the agent did not
fully optimize the factorization approach for performance, missing opportunities for further
optimization (e.g., precomputing primes, using efficient exponent counting).
\end{example}

\subsection*{5. Fundamental Problem Misunderstanding (0.9\%)}

The agent fundamentally misunderstands the problem's core constraints, requirements, or
mechanics, leading to solutions that do not address what is actually being asked.

\begin{example}[Problem 2096A, Div.~1]
The agent fundamentally misunderstood the problem's constraints, especially the meaning
of `\textless{}' and `\textgreater{}' in the context of ``smaller/larger than all previous
sticks,'' leading to repeated implementation of incorrect algorithms.
\end{example}

\begin{example}[Problem 2117E, Div.~3]
Fundamental misunderstanding of the problem's propagation mechanics led to repeated
incorrect implementations.
\end{example}

\section{Prompts}\label{sec:prompts}

\begin{tcolorbox}[
  title={API calls prompt},
  colback=white,
  colframe=black!50,
  fonttitle=\bfseries,
  boxrule=0.6pt,
  left=6pt,right=6pt,top=6pt,bottom=6pt,
  enhanced,
  breakable,
  ]

\textit{You are an expert at competitive coding challenges.}

Please solve the following problem

\begin{verbatim}
<problem_statement>
{{ problem_statement }}
</problem_statement>
\end{verbatim}

Here is some template for your solution

\begin{verbatim}
<template>
#include <bits/stdc++.h>

using namespace std;

#define ve vector
#define pa pair
#define tu tuple
#define mp make_pair
#define pb push_back
#define sz(a) ((int)(a).size())

#define ARG4(_1,_2,_3,_4,...) _4

#define forn3(i,l,r) for (int i = int(l); i < int(r); ++i)
#define forn2(i,n) forn3 (i, 0, n)
#define forn(...) ARG4(__VA_ARGS__, forn3, forn2) (__VA_ARGS__)

#define ford3(i,l,r) for (int i = int(r) - 1; i >= int(l); --i)
#define ford2(i,n) ford3 (i, 0, n)
#define ford(...) ARG4(__VA_ARGS__, ford3, ford2) (__VA_ARGS__)

const long double eps = 1e-9;
const int inf = (1 << 30) - 1;
const long long inf64 = (1LL << 62) - 1;

template<typename T> inline T abs (T x) {return x < 0 ? -x : x;}

double EPS = 1e-9;
long long INF = LLONG_MAX;

// ...

int main() {
    ios::sync_with_stdio(0);
    cin.tie(0);

#ifdef LOCAL
    freopen("input.txt", "r", stdin);
#endif

    // ...

    return 0;
}
</template>
\end{verbatim}

Your answer should include be formatted as follows

\begin{verbatim}
Thinking process

<code>
Your C++ code here
</code>
\end{verbatim}
\end{tcolorbox}

\begin{tcolorbox}[
  title={SWE-agent prompt},
  colback=white,
  colframe=black!50,
  fonttitle=\bfseries,
  boxrule=0.6pt,
  left=6pt,right=6pt,top=6pt,bottom=6pt,
  enhanced,
  breakable,
  ]

\textit{You are a competitive programmer solving a CodeForces contest.}

We're currently solving the following coding challenge:
\begin{verbatim}
{{problem_statement}}
\end{verbatim}

\medskip

INSTRUCTIONS:

\medskip

Your solution should be added to the directory [...].
Currently, the only file in this directory is [...] with a template for your solution.
However, you can also create additional files in this directory.

\medskip

At every step, please include a SINGLE function call.

\medskip

There are several helpful tools available, but you're free to use any other bash commands you want (e.g. find, grep, cat, ls, cd) in addition to the special commands listed above.
However, the environment does NOT support interactive session commands (e.g. python, vim), so please do not invoke them.

\medskip

Now, you're going to solve this problem on your own. Your terminal session has started and you're in the directory [...]. You can use any bash commands or the special interface to help you.

\medskip

IMPORTANT TIPS:
\begin{enumerate}
\item[1.] Use C++ for your solution. Use \verb|g++ -O2 -std=c++17 -DLOCAL -o res.out| for compilation, and \verb|./res.out| to execute. There is no need to specify the input file (it is already handled by the freopen call in the template).

\item[1a.] FOR INPUT, USE \verb|input.txt|! DO NOT USE STDIN FOR TESTING YOUR PROGRAMS.

\item[2.] For efficiency, chain your compilation and execution commands (\verb|g++ ... && echo "..." > input.txt && ./res.out| etc.).

\item[2a.] If you're making smaller edits, you can also chain your edits: \verb|str_replace_editor ... && g++ ... && echo "..." "> input.txt && ./res.out|

\item[3.] Once you have a solution candidate, call the \verb|submit_to_cf| tool to submit your solution to CodeForces

\item[3a.] There's no need to test before you run \verb|submit_to_cf|

\item[3b.] However, if \verb|submit_to_cf| returns a verdict that is not OK, it might be very helpful to start creating additional test cases to understand failure mode.
\end{enumerate}

YOU CANNOT GIVE UP BEFORE YOU RECEIVED AN OK VERDICT! IF YOU ARE STUCK, STILL TRY SOMETHING (e.g. try special cases)!
\end{tcolorbox}

\section{Additional Results}\label{sec:additional-results}

We refer to \cref{fig:cum-solved-vs-cost,fig:cum-solved-vs-queries} for the full version
of \cref{fig:cum-solved-vs-cost-o3,fig:cum-solved-vs-queries-o3} (with two more models).
In addition, we plot in \cref{fig:scaling-2115A} the scaling trends on a much harder Division 1 problem than in \cref{fig:scaling}.

\py{fig_cum_solved_vs_cost()}
\begin{figure}[htb!]
\centering
\begin{tabular}{c}
\begin{subfigure}[t]{0.900\textwidth}
\includegraphics[width=\linewidth]{solved_vs_cost_o3.pdf}
\caption{OpenAI O3}
\end{subfigure}
\\
\begin{subfigure}[t]{0.900\textwidth}
\includegraphics[width=\linewidth]{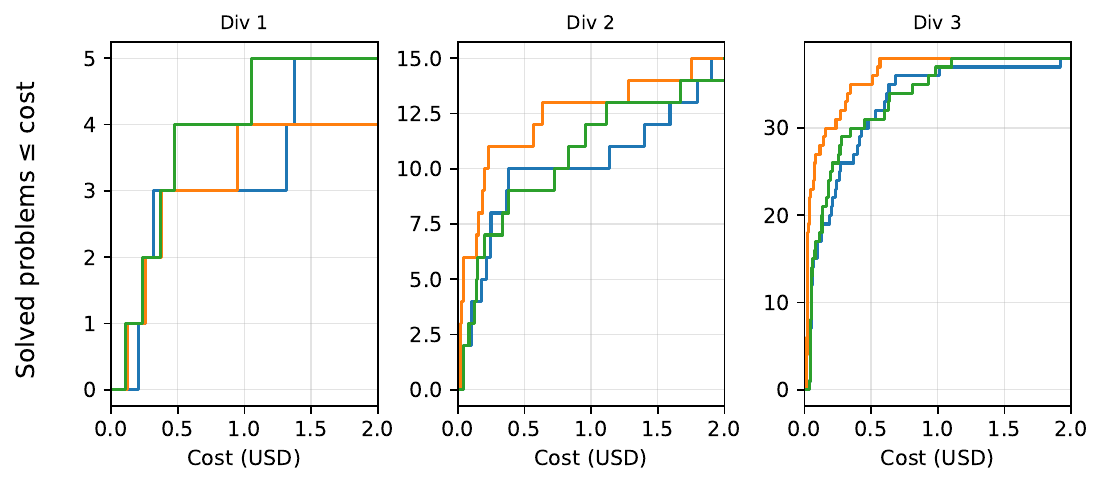}
\caption{Claude Sonnet 4}
\end{subfigure}
\\
\begin{subfigure}[t]{0.900\textwidth}
\includegraphics[width=\linewidth]{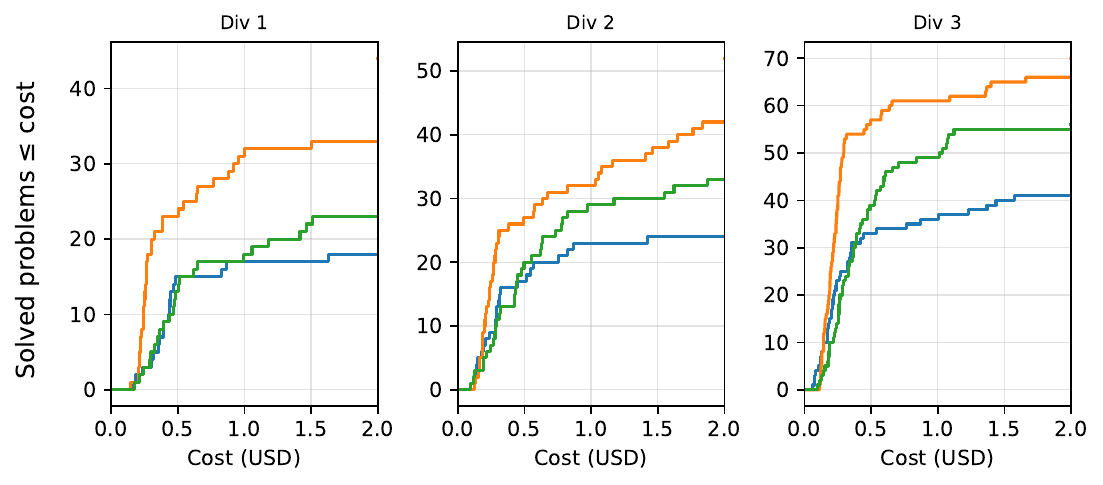}
\caption{Gemini 2.5 Pro}
\end{subfigure}
\end{tabular}
\caption{Cumulative solved problems versus inference cost: {\color{mplblue}agent}, {\color{mplorange}$k$-shot}, {\color{mplgreen}budget-partitioned agent ($\nicefrac{1}{3}\times 3$)}.}
\label{fig:cum-solved-vs-cost}
\end{figure}

\py{fig_cum_solved_vs_queries()}
\begin{figure}[htb!]
\centering
\begin{tabular}{c}
\begin{subfigure}[t]{0.900\textwidth}
\includegraphics[width=\linewidth]{solved_vs_queries_o3.pdf}
\caption{OpenAI O3}
\end{subfigure}
\\
\begin{subfigure}[t]{0.900\textwidth}
\includegraphics[width=\linewidth]{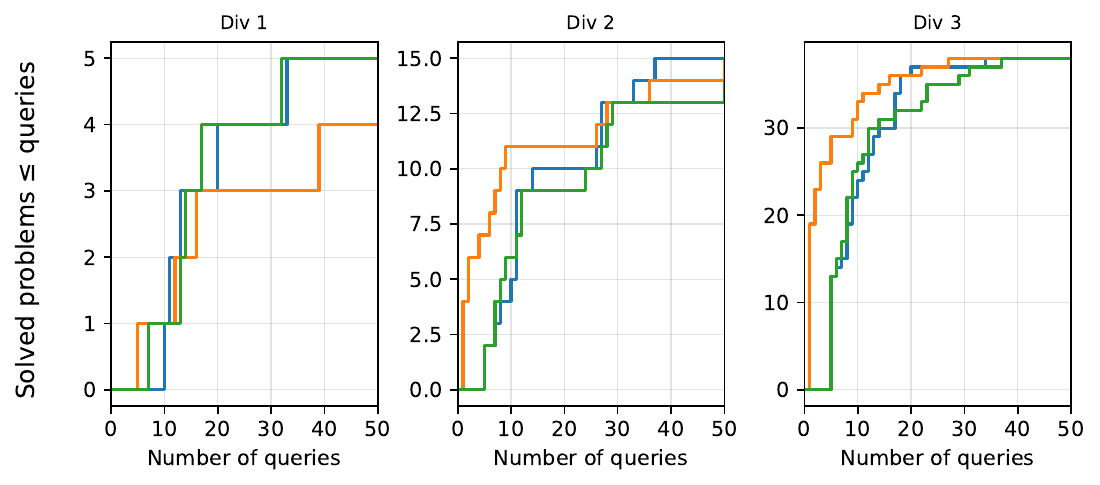}
\caption{Claude Sonnet 4}
\end{subfigure}
\\
\begin{subfigure}[t]{0.900\textwidth}
\includegraphics[width=\linewidth]{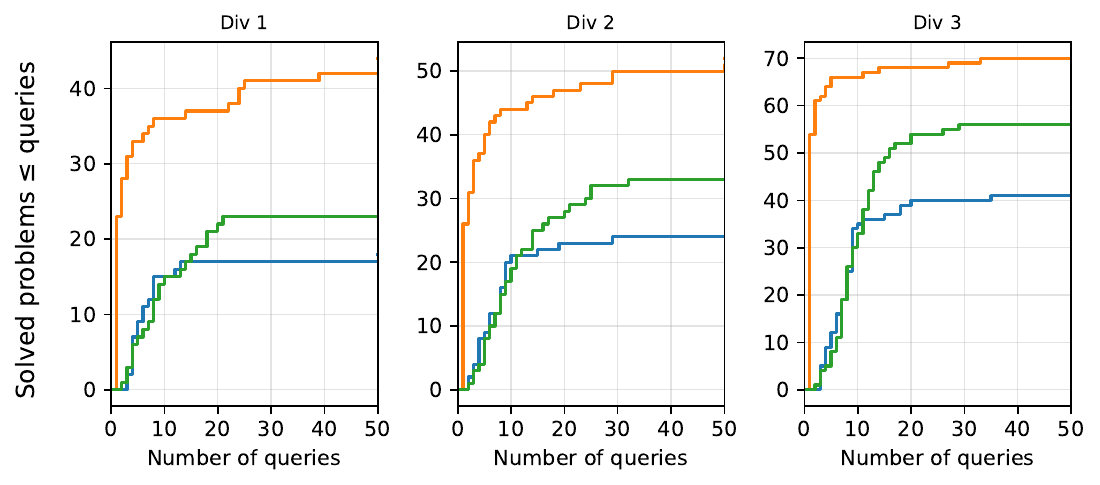}
\caption{Gemini 2.5 Pro}
\end{subfigure}
\end{tabular}
\caption{Cumulative solved problems versus number of queries: {\color{mplblue}agent}, {\color{mplorange}$k$-shot}, {\color{mplgreen}budget-partitioned agent ($\nicefrac{1}{3}\times 3$)}.}
\label{fig:cum-solved-vs-queries}
\end{figure}

\begin{figure}[h!]
\centering
\begin{tabular}{cc}
\begin{subfigure}[t]{0.48\textwidth}
\includegraphics[width=\linewidth]{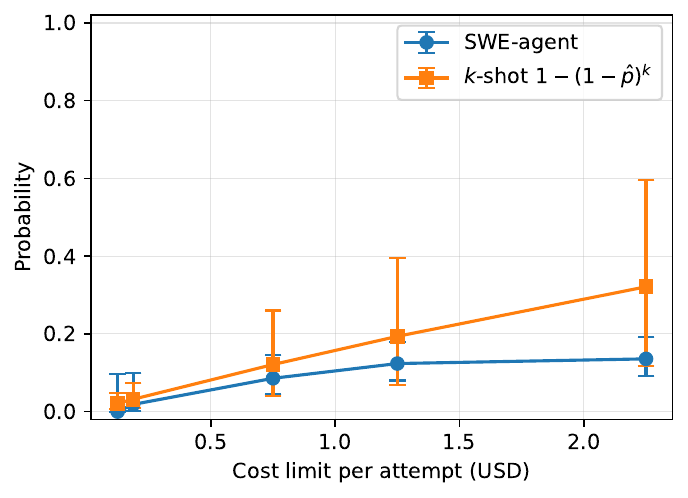}
\caption{The scaling curve of the success probability}
\end{subfigure}
&
\begin{subfigure}[t]{0.48\textwidth}
\includegraphics[width=\linewidth]{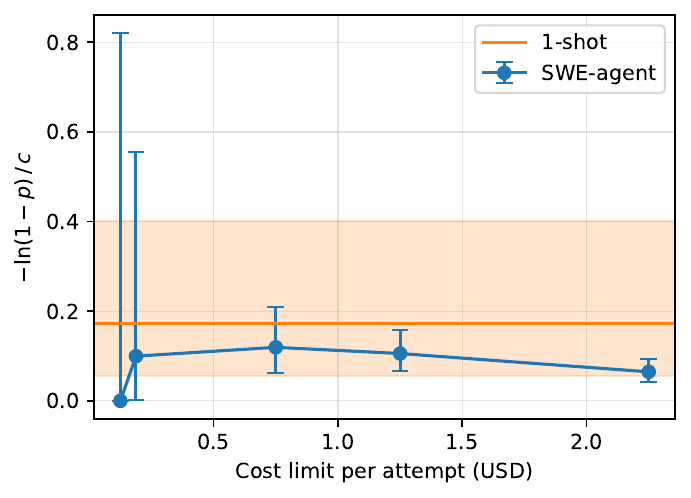}
\caption{The negative log failure likelihood per dollar is much lower for agents compared to a one-shot attempt\label{fig:neg-log-fail-likel-per-dollar-2115A}}
\end{subfigure}
\\
\begin{subfigure}[t]{0.48\textwidth}
\includegraphics[width=\linewidth]{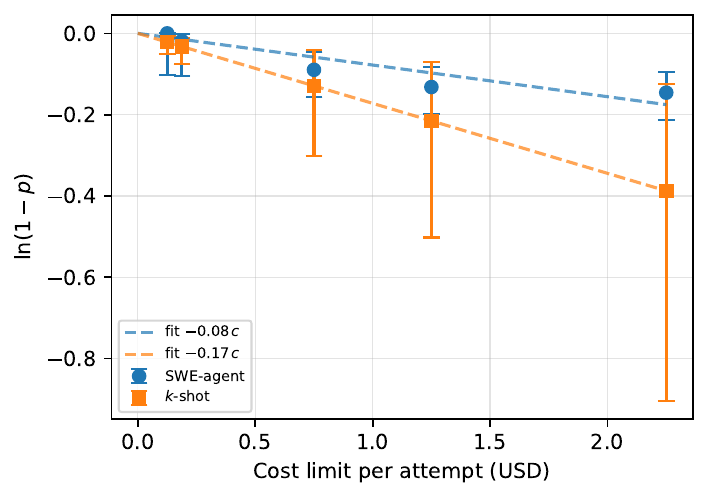}
\caption{The log failure likelihood as a function of the budget scales near-linearly for agents as well $k$-shot attempts}
\end{subfigure}
&
\begin{subfigure}[t]{0.48\textwidth}
\includegraphics[width=\linewidth]{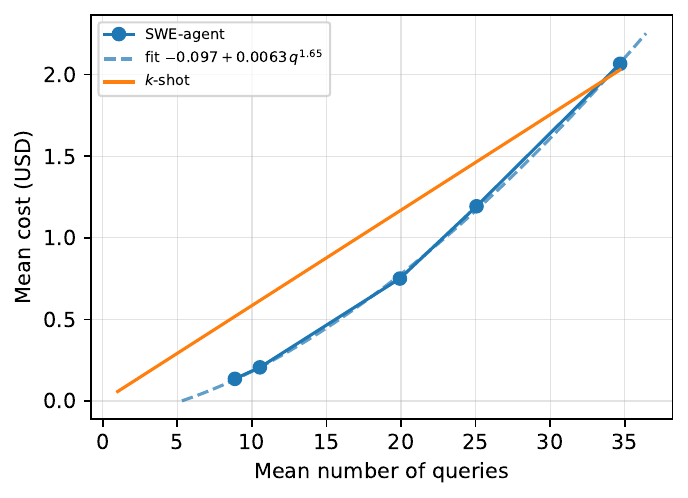}
\caption{The best power law fit of the agent's cost given $q$ queries is near-quadratic in $q$}
\end{subfigure}
\end{tabular}
\caption{Scaling trends of $k$-shot attempts vs. agents, on a Division 1 problem. The trends are similar to the ones in \cref{fig:scaling}. Because the problem is much harder, some confidence intervals are relatively wider and adding more points would incur higher costs.}
\label{fig:scaling-2115A}
\end{figure}


\end{document}